\newtheorem{theorem}{Theorem}
\newtheorem{lemma}{Lemma}
\newtheorem{definition}{Definition}
\DeclareMathOperator*{\argmax}{arg\!\max}
\newcommand{\ignore}[1]{}
\newcommand{\frank}[1]{\textbf{\textcolor{red}{Frank: #1}}}
\title{Pareto Optimization for Subset Selection with Dynamic Partition Matroid Constraints}
\author{
    Anh Viet Do\textsuperscript{\rm 1}, Frank Neumann\textsuperscript{\rm 1}
    \\
}
\begin{document}
\maketitle

\begin{abstract}
In this study, we consider the subset selection problems with submodular or monotone discrete objective functions under partition matroid constraints where the thresholds are dynamic. We focus on POMC, a simple Pareto optimization approach that has been shown to be effective on such problems. Our analysis departs from singular constraint problems and extends to problems of multiple constraints. We show that previous results of POMC's performance also hold for multiple constraints. Our experimental investigations on random undirected maxcut problems demonstrate POMC's competitiveness against the classical GREEDY algorithm with restart strategy.
\end{abstract}

\section{Introduction}
Many important real-world problems involve optimizing a submodular function. Such problems include maximum coverage, maximum cut \cite{MaxcutSDP}, maximum influence \cite{MaxSpreadInfluence}, sensor placement problem \cite{NearSensorPlacement,SubInfoGather}, as well as many problems in the machine learning domain \cite{FeatureSelection,DataSubsetSelection,DocSum,DocSumBudget,EfficientMinDecompSubmodular}. Much work has been done in the area of submodular optimization under static constraints. 
 A particularly well-studied class of algorithms in this line of research is greedy algorithms, which have been shown to be efficient in exploiting submodularity \cite{LocationBankOptFloat,MaxSubmodularMatroid,GreedyMaxBoundCurvPartMatroid,GreedNonmod}.
Important recent results on the use of evolutionary algorithms for submodular optimization are summarized in~\cite{DBLP:books/sp/ZhouYQ19}.

Real-world problems are seldom solved once, but rather many times over some period of time, during which they change. Such changes demand adapting the solutions that would otherwise become poor or infeasible. The dynamic nature of these problems presents many interesting optimization challenges, which have long been embraced by many researchers. A lot of research in the evolutionary computation literature has addressed these types of problems from an applied perspective.
Theoretical investigations have been carried out for evolutionary algorithms on some example functions and classical combinatorial optimization problems such as shortest paths, but in general the theoretical understand on complex dynamic problems is rather limited~\cite{DBLP:journals/corr/abs-1806-08547}.
In this paper, we follow the approach of carrying out theoretical runtime analysis of evolutionary algorithms with respect to their runtime and approximation behavior. This well established area of research has significantly increased the theoretical understanding of evolutionary computation methods~\cite{DBLP:books/daglib/0025643,BookDoeNeu}.

Many recent studies on submodular and near-submodular optimization have investigated Pareto optimization approaches based on evolutionary computation. \citet{POMC} derive an approximation guarantee for the POMC algorithm for maximizing monotone function under a monotone constraint. They show that POMC achieves an $(\alpha_f/2)(1-1/e^{\alpha_f})$-approximation within at most cubic expected run time. The recent study of \citet{MaxMonoApproxSubmodMulti} extends the results to a variant of the GSEMO algorithm (which inspired POMC) to the problem of maximizing general submodular functions, but under a cardinality constraint. It results reveal that non-monotonicity in objective functions worsens approximation guarantees.

In our work, we extend existing results for POMC \cite{ParetoSubsetDynCon} to partition matroid constraints with dynamic thresholds. We show that the proven adaptation efficiency facilitated by maintaining dominating populations can be extended for multiple constraints with appropriately defined dominance relations. In particular, we prove that POMC can achieve new approximation guarantees quickly whether the constraints thresholds are tightened or relaxed. Additionally, we study POMC experimentally on the dynamic max cut problem and compare its results against the results of greedy algorithms for underlying static problems. Our study evaluates the efficiency in change adaptation, thus assuming immaculate change detection. Our results show that POMC is competitive to GREEDY during unfavorable changes, and outperforming GREEDY otherwise.

In the next section, we formulate the problem and introduce the Pareto optimization approach that is subject to our investigations. Then we analyze the algorithm in terms of runtime and approximation behaviour when dealing with dynamic changes. Finally, we present the results of our experimental investigations and finish with some conclusions.
\section{Problem Formulation and Algorithm}

In this study, we consider optimization problems where the objective functions are either submodular or monotone. We use the following definition of submodularity \cite{Submod1978}.
\begin{definition}\label{submodular_def}
Given a finite set $V$, a function $f:2^V\to\mathbb{R}^+$ is submodular if it satisfies for all $X\subseteq Y\subseteq V$ and $v\in V\setminus Y$,
\[f(Y\cup\{v\})-f(Y)\leq f(X\cup\{v\})-f(X).\]
\end{definition}

As in many relevant works, we are interested in the submodularity ratio which quantifies how close a function is to being modular. In particular, we use a simplified version of the definition in \cite{spectral}.

\begin{definition}\label{submodratio_def}
For a monotone function $f:2^V\to\mathbb{R}^+$, its submodularity ratio with respect to two parameters $i$, $j\geq1$ is
\[\gamma_{i,j}=\min_{|X|<i,|L|\leq j,X\cap L=\emptyset}\frac{\sum_{v\in L}[f(X\cup\{v\})-f(X)]}{f(X\cup L)-f(X)},\]
for $i>0$ and $\gamma_{0,j}=\gamma_{1,j}$.
\end{definition}

It can be seen that $\gamma_{i,j}$ is non-negative, non-increasing with increasing $i$ and $j$, and $f$ is submodular iff $\gamma_{i,j}\geq1$ for all $(i,j)$. This ratio also indicates the intensity of the function's diminishing return effect. Additionally, non-monotonicity is also known to affect worst-case performance of algorithms \cite{GreedyMaxBoundCurvPartMatroid,MaxMonoApproxSubmodMulti}. As such, we also use the objective function's monotonicity approximation term defined similarly to \cite{NearSensorPlacement}, but only for subsets of a certain size.

\begin{definition}\label{mono_approx}
For a function $f:2^V\to\mathbb{R}^+$, its monotonicity approximation term with respect to a parameter $j$ is
\[\epsilon_j=\max_{X,v:|X|<j}\{f(X\setminus\{v\})-f(X)\},\]
for $j>0$ and $\epsilon_0=0$.
\end{definition}

It is the case that $\epsilon_j$ is non-negative, non-decreasing with increasing $j$, and $f$ is monotone iff $\epsilon_{n+1}=0$. We find that adding the size parameter can provide extra insight into the analysis results.

Consider the static optimization problem with partition matroid constraints.
\begin{definition}\label{def_static}
Given a set function $f:2^V\to\mathbb{R}^+$, a partitioning $B=\{B_i\}_{i=1}^k$ of $V$, and a set of integer thresholds $D=\{d_i\}_{i=1}^k$, the problem is
\begin{align*}
\underset{X\subseteq V}{\text{maximize }}f(X),
\text{ s.t. }|X\cap B_i|\leq d_i,\quad\forall i=1,\dots,k.
\end{align*}
\end{definition}
We define notations $d=\sum_{i=1}^kd_i$, $\bar{d}=\min_i\{d_i\}$, and $OPT\subseteq V$ the feasible optimal solution. A solution $X\subseteq V$ is feasible iff it satisfies all constraints. It can be shown that $\bar{d}\leq d/k$, $|OPT|\leq d$, and any solution $X$ where $|X|\leq\bar{d}$ is feasible. Each instance is then uniquely defined by the triplet $(f,B,D)$. Without loss of generality, we assume $1\leq d_i\leq |B_i|$ for all $i$.

We study the dynamic version of the problem in Definition \ref{def_static}. This dynamic problem demands adapting the solutions to changing constraints whenever such changes occur.
\begin{definition}\label{def_dynamic}
Given the problem in Definition \ref{def_static}, a dynamic problem instance is defined by a sequence of changes where the current $D$ in each change is replaced by $D^*=\{d^*_i\}_{i=1}^k$ such that $d^*_i\in[1,|B_i|]$ for $i=1,\dots,k$. The problem is to generate a solution $X$ that maximizes $f(X)$ for each newly given $D^*$ such that
\[|X\cap B_i|\leq d^*_i,\quad\forall i=1,\dots,k.\]
\end{definition}

Such problems involve changing constraint thresholds over time. Using the oracle model, we assume time progresses whenever a solution is evaluated. We define notations $d^*=\sum_{i=1}^kd^*_i$, $\bar{d}^*=\min_i\{d^*_i\}$, and the new optimal solution $OPT^*$. Similarly, we assume $1\leq d^*_i\leq |B^*_i|$ for all $i$. Lastly, while restarting from scratch for each new thresholds is a viable tactic for any static problems solver, we focus on the capability of the algorithm to adapt to such changes.

\subsection{POMC algorithm}

The POMC algorithm \cite{POMC} is a Pareto Optimization approach for constrained optimization. It is also known as GSEMO algorithm in the evolutionary computation literature~\cite{SEMO,GSEMO,GSEMO2015}. As with many other evolutionary algorithms, the binary representation of a set solutions is used. For this algorithm, we reformulate the problem as a bi-objective optimization problem given as
\[\text{maximize }(f_1(X),f_2(X)),\]
where
\[f_1(X)=\begin{cases*}
f(X),&if $X$ is feasible\\
-\infty,&otherwise
\end{cases*},\text{ }f_2(x)=-|X|.\]

POMC optimizes two objectives simultaneously, using the dominance relation between solutions, which is common in Pareto optimization approaches. Recall that solution $X_1$ dominates $X_2$ ($X_1\succeq X_2$) iff $f_1(X_1)\geq f_1(X_2)$ and $f_2(X_1)\geq f_2(X_2)$. The dominance relation is strict ($X_1\succ X_2$) iff $X_1\succeq X_2$ and $f_i(X_1)>f_i(X_2)$ for at least one $i \in \{1,2\}$. Intuitively, dominance relation formalizes the notion of ``better'' solution in multi-objective contexts. Solutions that don't dominate any other present a trade-off between objectives to be optimized.

The second objective in POMC is typically formulated to promote solutions that are ``further'' from being infeasible. The intuition is that for those solutions, there is more room for feasible modification, thus having more potential of becoming very good solutions. For the problem of interest, one way of measuring ``distance to infeasibility'' for some solution $X$ is counting the number of elements in $V\setminus X$ that can be added to $X$ before it is infeasible. The value then would be $d-|X|$, which is the same as $f_2(X)$ in practice. Another way is counting the minimum number of elements in $V\setminus X$ that need to be added to $X$ before it is infeasible. The value would then be $\min_i\{d_i-|B_i\cap X|\}$. The former approach is chosen for simplicity and viability under weaker assumptions about the considered problem.

On the other hand, the first objective aims to present the canonical evolutionary pressure based on objective values. Additionally, $f_1$ also discourages all infeasible solutions, which is different from the formulation in \cite{POMC} that allows some degree of infeasibility. This is because for $k>1$, there can be some infeasible solution $Y$ where $|Y|\leq d$. If $f_1(Y)$ is very high, it can dominate many good feasible solutions, and may prevent acceptance of global optimal solutions into the population. Furthermore, restricting to only feasible solutions decreases the maximum population size, which can improve convergence performance. As a consequence, the population size of POMC for our formulation is at most $d+1$. Our formulation of the two objective functions is identical to the ones in \cite{GSEMO2015} when $k=1$.

\begin{algorithm}[t]
\begin{algorithmic}[1]
\STATEx \textbf{Input:} a problem instance: $(f,B,D)$
\STATEx \textbf{Parameter:} the number of iterations $T\geq0$
\STATEx \textbf{Output:} a feasible solution $x\in\{0,1\}^n$
\STATE $x\gets0^n$, $P\gets\{x\}$
\WHILE{$t<T$}
\IF{Change is detected}
\STATE $P\gets P\setminus\{x\in P|\exists y\in P,y\neq x\wedge y\succeq x\}$
\ENDIF
\STATE Randomly sample a solution $y$ from $P$
\STATE $y'\gets y$ after flipping each bit with probability $1/n$
\IF{$\nexists x\in P,x\succ y'$}
\STATE $P\gets(P\setminus\{x\in P|y'\succeq x\})\cup\{y'\}$
\ENDIF
\ENDWHILE
\STATE \textbf{return} $\argmax_{x\in P}f_1(x)$
\end{algorithmic}
\caption{POMC algorithm for dynamic problems}
\label{alg:gsemo}
\end{algorithm}
POMC (see Algorithm~\ref{alg:gsemo}) starts with initial population consisting of the search point $0^n$ which represents the empty set. In each iteration, a new solution is generated by random parent selection and bit flip mutation. Then the elitist survivor selection mechanism removes dominated solutions from the population, effectively maintaining a set of trade-off solutions for the given objectives. The algorithm terminates when the number of iteration reaches some predetermined limit. We choose empty set as the initial solution, similar to \cite{POMC} and different from \cite{GSEMO2015}, to simplify the analysis and stabilize theoretical performance. Note that POMC calls the oracle once per iteration to evaluate a new solution, so its run time is identical to the number of iterations.

We assume that changes are made known to the algorithm as they occur, and that feasibility can be checked efficiently. The reason for this is that infeasibility induced by changes in multiple thresholds has nontrivial impact on the algorithm's behaviour. For single constraint problems, this impact is limited to increases in population size \cite{ParetoSubsetDynCon}, since a solution's degree of feasibility entirely correlates with its second objective value. However, this is no longer the case for multiple constraints, as the second objective aggregates all constraints. While it reduces the population size as the result, it also allows for possibilities where solutions of small size (high in second objective) become infeasible after a change and thus dominate other feasible solutions of greater cardinality without updating evaluations. This can be circumvented by assuming that the changes' directions are the same every time, i.e., $(d^*_i-d_i)(d^*_j-d_j)\geq0$ for every $(i,j)$ pair. Instead of imposing assumptions on the problems, we only assume scenarios where POMC successfully detects and responds to changes. This allows us to focus our analysis entirely on the algorithm's adaptation efficiency under arbitrary constraint threshold change scenarios.

\section{Runtime Analysis}

For the runtime analysis of POMC for static problems, we refer to the results by~\citet{Do2020}. In short, its worst-case approximation ratios on static problems are comparable to those of the classical GREEDY algorithm, assuming submodularity and weak monotonicity in the objective function \cite{GreedyMaxBoundCurvPartMatroid}. On the other hand, a direct comparison in other cases is not straightforward as the bounds involve different sets of parameters.

The strength of POMC in dynamic constraints handling lies in the fact that it stores a good solution for each cardinality level up to $\bar{d}$. In this way, when $\bar{d}$ changes, the population will contain good solutions for re-optimization. We use the concept of greedy addition $v^*_X$ to a solution $X$.
\[v^*_X=\argmax_{v\in V\setminus X}f_1(X\cup\{v\}).\]

It can be shown that for any $X$ where $|X|<\bar{d}$, the corresponding greedy addition $v^*_X$ w.r.t. $(f,B,D)$ is the same as the one w.r.t. $(f,B,D^*)$ if $\bar{d}^*\geq\bar{d}$, since $X\cup\{v\}$ is still feasible for all $v\in V\setminus X$. Thus, we can derive the following result from Lemma 2 in \cite{MaxMonoApproxSubmodMulti}, and Lemma 1 in \cite{Do2020}.

\begin{lemma}\label{greed_prog_submodular_dyn}
Let $f$ be a submodular function, $\bar{d}^*\geq\bar{d}$, and $\epsilon_d$ be defined in Definition \ref{mono_approx}, for all $X\subseteq V$ such that $|X|=j<\bar{d}$, we have both
\begin{align*}
&f(X\cup\{v^*_X\})-f(X)\geq\frac{1}{d}\left[f(OPT)-f(X)-j\epsilon_{d+j+1}\right],\\
&f(X\cup\{v^*_X\})-f(X)\geq\frac{1}{d^*}\left[f(OPT^*)-f(X)-j\epsilon_{d^*+j+1}\right].
\end{align*}
\end{lemma}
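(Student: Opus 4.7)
The plan is to prove both inequalities by one unified argument, since they have identical structure and differ only in which constraint set and optimum are used. The first step I would take is the observation that ties the two cases together: when $|X|=j<\bar{d}\leq\bar{d}^*$, for every $v\in V\setminus X$ we have $|(X\cup\{v\})\cap B_i|\leq j+1\leq\bar{d}\leq\min\{d_i,d^*_i\}$, so $X\cup\{v\}$ is feasible under both $D$ and $D^*$. Consequently the greedy addition $v^*_X$, defined via $f_1$, is the same element in both instances, and in either case equals $\argmax_{v\in V\setminus X}f(X\cup\{v\})$.

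Next, I would fix one of the two optima, call it $O\in\{OPT,OPT^*\}$ with corresponding total budget $D_{\mathrm{tot}}\in\{d,d^*\}$, and decompose
\[f(O)-f(X)=\bigl[f(X\cup O)-f(X)\bigr]+\bigl[f(O)-f(X\cup O)\bigr].\]
For the first bracket I would use the standard submodular telescoping: enumerate $O\setminus X$ and add the elements one at a time to $X$; submodularity bounds each marginal gain by $f(X\cup\{v\})-f(X)$. Since $v^*_X$ maximizes this marginal gain over $V\setminus X$ and $|O\setminus X|\leq|O|\leq D_{\mathrm{tot}}$, this yields
\[f(X\cup O)-f(X)\leq D_{\mathrm{tot}}\bigl[f(X\cup\{v^*_X\})-f(X)\bigr].\]

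For the second bracket I would remove the at most $j$ elements of $X\setminus O$ one at a time from $X\cup O$ until reaching $O$. Every intermediate set has cardinality at most $|X\cup O|\leq j+D_{\mathrm{tot}}$, hence strictly less than $D_{\mathrm{tot}}+j+1$, so by Definition \ref{mono_approx} each single removal increases $f$ by at most $\epsilon_{D_{\mathrm{tot}}+j+1}$; summing over at most $j$ steps gives $f(O)-f(X\cup O)\leq j\epsilon_{D_{\mathrm{tot}}+j+1}$. Combining the two brackets and rearranging produces the claimed inequality, and specializing $(O,D_{\mathrm{tot}})=(OPT,d)$ and $(OPT^*,d^*)$ yields the two stated bounds.

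The main subtlety is the opening step: to apply the same $v^*_X$ to bound a marginal gain against both $f(OPT)$ and $f(OPT^*)$, one must be sure that the greedy pick is independent of which constraint set is active, and this is exactly where the hypotheses $|X|<\bar{d}$ and $\bar{d}^*\geq\bar{d}$ are used. Once this is established, the remainder is a routine assembly of the submodular telescoping bound and the $\epsilon$-based correction for non-monotonicity, with no further obstacle.
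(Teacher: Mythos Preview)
Your proposal is correct and matches the paper's approach: the paper's own proof simply invokes Lemma~1 of the cited reference for the first inequality and then observes that $|X|<\bar d\le\bar d^*$ makes every $X\cup\{v\}$ feasible under both $D$ and $D^*$, so the greedy element $v^*_X$ is unchanged and the second inequality follows by the identical argument. Your write-up is more self-contained---you spell out the submodular telescoping bound and the $\epsilon$-based non-monotonicity correction that the paper defers to the citation---but the underlying argument and the key observation are the same.
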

\begin{proof}
The first part is from Lemma 1 in \cite{Do2020}, while the second follows since if $|X|<\bar{d}\leq\bar{d}^*$, then the element contributing the greedy marginal gain to $f_1(X)$ is unchanged.
\end{proof}

The result carries over due to satisfied assumption $|X|<\bar{d}^*$. Using these inequalities, we can construct a proof, following a similar strategy as the one for Theorem 5 in \cite{ParetoSubsetDynCon} which leads to the following theorem.

\begin{theorem}\label{gsemo_submodular_dyn}
For the problem of maximizing a submodular function under partition matroid constraints, assuming $\bar{d}^*\geq\bar{d}$, POMC generates a population $P$ in expected run time $\mathcal{O}(d^2n/k)$ such that
\begin{align}\label{eq:gsemo_submodular_dyn}
\begin{split}
&\forall m\in[0,\bar{d}],\exists X\in P,|X|\leq m\\&\wedge f(X)\geq\left[1-\left(1-\frac{1}{d}\right)^{m}\right][f(OPT)-(m-1)\epsilon_{d+m}]\\&\wedge f(X)\geq\left[1-\left(1-\frac{1}{d^*}\right)^{m}\right][f(OPT^*)-(m-1)\epsilon_{d^*+m}].
\end{split}
\end{align}
\end{theorem}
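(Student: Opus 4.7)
My plan is to prove both bounds simultaneously by induction on $m \in \{0, 1, \ldots, \bar{d}\}$, relying on a monotonicity property of the POMC population: once some $X \in P$ witnesses the level-$m$ bound with $|X| \leq m$, any later survivor replacing it must have weakly smaller cardinality and weakly larger $f$-value, so it inherits the witness role. The base case $m = 0$ is immediate since $\emptyset \in P$ at initialization, persists indefinitely (no other size-$0$ set exists to replace it), and both bounds collapse to $f(\emptyset) \geq 0$.

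For the inductive step, suppose $X^*_{m-1} \in P$ with $|X^*_{m-1}| \leq m-1$ witnesses the level-$(m-1)$ bounds. I bound the expected waiting time until POMC produces the child $Y = X^*_{m-1} \cup \{v^*_{X^*_{m-1}}\}$: parent selection picks $X^*_{m-1}$ with probability $\geq 1/(d+1)$ since $|P| \leq d+1$ (at most one representative per cardinality level $0, \ldots, d$), and the single-bit flip at the greedy-addition position occurs with probability $\geq 1/(en)$. This yields success probability $\Omega(1/(dn))$ per iteration, hence $\mathcal{O}(dn)$ expected iterations per level and $\mathcal{O}(\bar{d} d n) = \mathcal{O}(d^2 n / k)$ overall using $\bar{d} \leq d/k$. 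The child $Y$ is feasible under both old and new constraints since $|Y| \leq \bar{d} \leq \bar{d}^*$, so it is never rejected on feasibility grounds: either $Y$ enters $P$ directly, or a dominator already present takes the role of level-$m$ witness.

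The approximation guarantee then follows from Lemma \ref{greed_prog_submodular_dyn} applied at $X^*_{m-1}$, whose two gain inequalities share the common greedy addition $v^*_{X^*_{m-1}}$ and thus advance both bounds in parallel. Setting $\alpha = 1 - 1/d$ and letting $j' = |X^*_{m-1}| \leq m-1$, non-decreasingness of $\epsilon$ gives $j' \epsilon_{d + j' + 1} \leq (m-1)\epsilon_{d + m}$, so rearranging yields $f(Y) \geq (1 - 1/d) f(X^*_{m-1}) + (1/d)[f(OPT) - (m-1) \epsilon_{d + m}]$. Plugging in the inductive hypothesis, the coefficient of $f(OPT)$ telescopes to $1 - \alpha^m$, and the residual $\epsilon$-term reduces to checking $(1 - 1/d)(1 - \alpha^{m-1})(m-2) + (m-1)/d \leq (1 - \alpha^m)(m-1)$, which rearranges algebraically to $\alpha - \alpha^m \geq 0$. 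The identical manipulation with $d^*$, $OPT^*$, and $\epsilon_{d^* + m}$ derives the second bound from the second inequality of the lemma.

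The principal obstacle is the bookkeeping when $|X^*_{m-1}|$ is strictly less than $m-1$: the inductive hypothesis is being propagated at a looser cardinality than the level index, so telescoping the $\epsilon$-terms across levels hinges on the monotonicity $\epsilon_{d + j' + 1} \leq \epsilon_{d + m}$ rather than any tight identity. Keeping the $OPT$ and $OPT^*$ inductions coupled, using the same witness $X^*_m$ at each level rather than building two separate chains, is what delivers the joint existence quantifier in~(\ref{eq:gsemo_submodular_dyn}).
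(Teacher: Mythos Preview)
Your proposal is correct and follows essentially the same approach as the paper: induction on the level $m$ via a predicate $S(X,m)$ that couples both $OPT$ and $OPT^*$ bounds, persistence of witnesses under dominance, the greedy single-bit step driven by Lemma~\ref{greed_prog_submodular_dyn}, and the $1/(en(d+1))$ success probability combined with additive drift to obtain the $\mathcal{O}(\bar{d}\,dn)=\mathcal{O}(d^2n/k)$ bound. You are in fact slightly more explicit than the paper in handling the case $|X^*_{m-1}|<m-1$ and in verifying the $\epsilon$-telescoping via the inequality $\alpha-\alpha^m\geq 0$, whereas the paper leaves these routine checks implicit.
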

\begin{proof}
Let $S(X,j)$ be the expression
\begin{align*}
&|X|\leq j\\&\wedge f(X)\geq\left[1-\left(1-\frac{1}{d}\right)^{j}\right][f(OPT)-(j-1)\epsilon_{d+j}]\\&\wedge f(X)\geq\left[1-\left(1-\frac{1}{d^*}\right)^{j}\right][f(OPT^*)-(j-1)\epsilon_{d^*+j}],
\end{align*}
and $Q(i)$ be the expression $\forall j\in[0,i],\exists X\in P_t,S(X,j)$. We have that $Q(0)$ holds. For each $h\in[0,\bar{d}-1]$, assume $Q(h)$ holds and $Q(h+1)$ does not hold at some iteration $t$. Let $X\in P_t$ be the solution such that $S(X,h)$ holds, $S(Y,h)$ and $Q(h)$ holds at iteration $t+1$ for any solution $Y\in P_{t+1}$ such that $Y\succeq X$. This means once $Q(h)$ holds, it must hold in all subsequent iterations of POMC. Let $X^*=X\cup\{v^*_X\}$, Lemma \ref{greed_prog_submodular_dyn} implies
\begin{align*}
f(X^*)&\geq\left[1-\left(1-\frac{1}{d}\right)^{h+1}\right][f(OPT)-h\epsilon_{d+h+1}],\\
\text{and}&\\
f(X^*)&\geq\frac{1}{d^*}f(OPT^*)-\frac{h}{d^*}\epsilon_{d^*+h+1}\\&+\left(1-\frac{1}{d^*}\right)\left[1-\left(1-\frac{1}{d^*}\right)^h\right]\\&[f(OPT^*)-(h-1)\epsilon_{d^*+h}]\\&
\geq\left[1-\left(1-\frac{1}{d^*}\right)^{h+1}\right][f(OPT^*)-h\epsilon_{d^*+h+1}].
\end{align*}
The second inequality uses $0\leq\epsilon_{d^*+h}\leq\epsilon_{d^*+h+1}$. This means that $S(X^*,h+1)$ holds. Therefore, if $X^*$ is generated, then $Q(h+1)$ holds, regardless of whether $X^*$ is dominated afterwards. According to the bit flip procedure, $X^*$ is generated with probability at least $1/(en(d+1))$ which implies
\[\Pr[Q(h+1)|Q(h)]\geq\frac{1}{en(d+1)},\text{ and }\Pr[\neg Q(h)|Q(h)]=0.\]

Using the additive drift theorem \cite{DriftAnalysis}, the expected number of iterations until $Q(\bar{d})$ holds is at most $e\bar{d}n(d+1)=\mathcal{O}(d^2n/k)$. This completes the proof.
\end{proof}

The statement \eqref{eq:gsemo_submodular_dyn} implies the following results.
\begin{align*}
&\forall m\in[0,\bar{d}],\exists X\in P,|X|\leq m\\&\wedge f(X)\geq\left(1-e^{m/d}\right)\left[f(OPT)-(m-1)\epsilon_{d+m}\right]\\&\wedge f(X)\geq\left(1-e^{m/d^*}\right)\left[f(OPT^*)-(m-1)\epsilon_{d^*+m}\right].
\end{align*}

We did not put this more elegant form directly in Theorem \ref{gsemo_submodular_dyn} since it cannot be used in the subsequent proof; only Expression \eqref{eq:gsemo_submodular_dyn} is applicable. Note that the result also holds for $\bar{d}^*\leq\bar{d}$ if we change the quantifier to $\forall m\in[0,\bar{d}^*]$. It implies that when a change such that $\bar{d}^*\leq\bar{d}$ occurs after cubic run time, POMC is likely to instantly satisfy the new approximation ratio bound, which would have taken it extra cubic run time to achieve if restarted. Therefore, it adapts well in such cases, assuming sufficient run time is allowed between changes. On the other hand, if $\bar{d}^*>\bar{d}$, the magnitude of the increase affects the difficulty with which the ratio can be maintained. The result also states a ratio bound w.r.t. the new optimum corresponding to the new constraint thresholds. As we will show using this statement, by keeping the current population (while discarding infeasible solutions), POMC can adapt to the new optimum quicker than it can with the restart strategy.

\begin{theorem}\label{gsemo_submodular_dyn2}
Assuming POMC achieves a population satisfying \eqref{eq:gsemo_submodular_dyn}, after the change where $\bar{d}^*>\bar{d}$, POMC generates in expected time $\mathcal{O}((\bar{d}^*-\bar{d})d^*n)$ a solution $X$ such that
\[f(X)\geq\left(1-e^{\bar{d}^*/d^*}\right)\left[f(OPT^*)-(\bar{d}^*-1)\epsilon_{d^*+\bar{d}^*}\right].\]
\end{theorem}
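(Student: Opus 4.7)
The plan is to reuse the drift-style induction from the proof of Theorem \ref{gsemo_submodular_dyn}, but starting from the population obtained right after the change (which, by hypothesis, already satisfies the predicate of Theorem \ref{gsemo_submodular_dyn} for every $m \in [0,\bar{d}]$) and extending the guarantee one cardinality level at a time up to $m = \bar{d}^*$. Concretely, let $S^*(X,j)$ denote the $OPT^*$-part of the predicate in \eqref{eq:gsemo_submodular_dyn}, and let $Q^*(i) \equiv \forall j \in [0,i], \exists X \in P_t, S^*(X,j)$. By assumption $Q^*(\bar{d})$ holds at the moment of the change, and I would show (i) that $Q^*$ is monotone non-decreasing along the run, and (ii) that the transition $Q^*(h) \to Q^*(h+1)$ happens in expected $\mathcal{O}(nd^*)$ iterations for each $h \in [\bar{d},\bar{d}^*-1]$.

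The core of step (i) needs a mild generalisation of Lemma \ref{greed_prog_submodular_dyn}: its second inequality only requires that $X\cup\{v\}$ be feasible under the \emph{new} constraints for every $v \in V\setminus X$, which is ensured by $|X|<\bar{d}^*$, not by the stronger $|X|<\bar{d}$ stated there. With this extension, the exact telescoping computation from the proof of Theorem \ref{gsemo_submodular_dyn} shows that whenever $S^*(X,h)$ holds with $\bar{d} \leq h < \bar{d}^*$ and the offspring $X^* := X \cup \{v^*_X\}$ is produced, $S^*(X^*,h+1)$ holds as well (using also $\epsilon_{d^*+h} \leq \epsilon_{d^*+h+1}$). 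Because $X^*$ is then either accepted into $P$ or is strictly dominated by some solution of no greater cardinality, the witness at level $h+1$ is preserved thereafter, so $Q^*(h+1)$ becomes and remains true.

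For step (ii), after the change every population member is feasible, so $|P_t| \leq d^*+1$; the witness $X$ of $S^*(X,h)$ is sampled with probability at least $1/(d^*+1)$, and the single-bit mutation producing $X^*$ occurs with probability at least $(1/n)(1-1/n)^{n-1} \geq 1/(en)$, giving $\Pr[Q^*(h+1)\mid Q^*(h)] \geq 1/(en(d^*+1))$. Additive drift across the $\bar{d}^*-\bar{d}$ levels then yields the claimed $\mathcal{O}((\bar{d}^*-\bar{d})\,d^* n)$ bound, and the stated approximation ratio follows by applying $1-(1-1/d^*)^{\bar{d}^*} \geq 1-e^{-\bar{d}^*/d^*}$ to the guarantee provided by $S^*(X,\bar{d}^*)$. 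The main obstacle I anticipate is the careful justification that the greedy-gain inequality against $OPT^*$ remains valid in the intermediate range $\bar{d} \leq |X| < \bar{d}^*$, where $X$ may have become infeasible under the old thresholds and Lemma \ref{greed_prog_submodular_dyn} as written no longer directly applies; once that small extension is in place, the rest is essentially a re-run of the Theorem \ref{gsemo_submodular_dyn} drift argument with $d$ replaced by $d^*$ throughout.
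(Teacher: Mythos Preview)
Your proposal is correct and follows essentially the same drift-based argument as the paper: define the $OPT^*$-predicate at each cardinality level, start from the guarantee at level $\bar{d}$ provided by \eqref{eq:gsemo_submodular_dyn}, advance one level per successful greedy insertion with probability at least $1/(en(d^*+1))$, and apply additive drift over the $\bar{d}^*-\bar{d}$ remaining levels. You are in fact more explicit than the paper about the need to extend Lemma~\ref{greed_prog_submodular_dyn} to the range $\bar{d}\le |X|<\bar{d}^*$ (the paper simply redefines $v^*_{\bar{X}}$ with respect to $(f,B,D^*)$ and invokes the lemma without comment), so your identified ``obstacle'' is already handled correctly.
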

\begin{proof}
Let $S(X,i)$ be the expression
\begin{align*}
&|X|\leq j\\&\wedge f(X)\geq\left[1-\left(1-\frac{1}{d^*}\right)^{j}\right][f(OPT^*)-(j-1)\epsilon_{d^*+j}].
\end{align*}
Assuming \eqref{eq:gsemo_submodular_dyn} holds for some iteration $t$, let $\bar{X}\in P_t$ be a solution such that $S(\bar{X},i)$ holds for some $i\in[\bar{d},\bar{d}^*)$, and $v^*_{\bar{X}}=\argmax_{v\in V\setminus\bar{X}}\{f_1(\bar{X}\cup\{v\})-f_1(\bar{X})\}$ w.r.t. $(f,B,D^*)$ for any $\bar{X}\subset V$, and $\bar{X}'=\bar{X}\cup\{v^*_{\bar{X}}\}$, Lemma \ref{greed_prog_submodular_dyn} implies
\begin{align*}
f(\bar{X}')&\geq\left[1-\left(1-\frac{1}{d^*}\right)^{i+1}\right][f(OPT^*)-i\epsilon_{d^*+i+1}].
\end{align*}
Hence, $S(\bar{X}',i+1)$ holds. It is shown that such a solution is generated by POMC with probability at least $1/(en(d^*+1))$. Also, for any solution $X$ satisfying $S(X,i)$, another solution $Y\succeq X$ must also satisfy $S(Y,i)$. Therefore, the Additive Drift Theorem implies that given $S(X,\bar{d})$ holds for some $X$ in the population, POMC generates a solution $Y$ satisfying $S(Y,\bar{d}^*)$ in expected time at most $(\bar{d}^*-\bar{d})en(d^*+1)=\mathcal{O}((\bar{d}^*-\bar{d})d^*n)$. Such a solution satisfies the inequality in the theorem.
\end{proof}

The degree with which $\bar{d}$ increases only contributes linearly to the run time bound, which shows efficient adaptation to the new search space.

For the monotone objective functions cases, without loss of generality, we assume that $f$ is normalized ($f(\emptyset)=0$). We make use of the following inequalities, derived from Lemma 1 in \cite{PPOSS}, and Lemma 2 in \cite{Do2020}, using the same insight as before.

\begin{lemma}\label{greed_prog_monotone_dyn}
Let $f$ be a monotone function, $\bar{d}^*\geq\bar{d}$, and $\gamma_{i,j}$ be defined in Definition \ref{submodratio_def}, for all $X\subseteq V$ such that $|X|=j<\bar{d}$, we have both
\begin{align*}
&f(X\cup\{v^*_X\})-f(X)\geq\frac{\gamma_{j+1,d}}{d}[f(OPT)-f(X)],\\
&f(X\cup\{v^*_X\})-f(X)\geq\frac{\gamma_{j+1,d^*}}{d^*}[f(OPT^*)-f(X)].
\end{align*}
\end{lemma}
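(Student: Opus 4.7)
The plan is to mirror the proof strategy of Lemma~\ref{greed_prog_submodular_dyn}, reusing existing static-problem bounds and leveraging the observation that the greedy addition is invariant across the two constraint settings when $|X|<\bar{d}$. The first inequality is a direct application of Lemma~2 in \cite{Do2020} (itself building on Lemma~1 in \cite{PPOSS}) to the static instance $(f,B,D)$, so I would cite it verbatim without reproving. The real task is the second inequality, which requires arguing that the same type of bound holds when we swap in $OPT^*$ and $d^*$.

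The key step is to establish that, for any $X$ with $|X|=j<\bar{d}$, the greedy addition $v^*_X=\argmax_{v\in V\setminus X}f_1(X\cup\{v\})$ computed w.r.t.\ $(f,B,D)$ coincides with the one computed w.r.t.\ $(f,B,D^*)$. Because $|X|<\bar{d}\leq\bar{d}^*$, every singleton extension $X\cup\{v\}$ is feasible under both $D$ and $D^*$, so $f_1(X\cup\{v\})=f(X\cup\{v\})$ in both cases and the argmax agrees with the unconstrained greedy choice. This is precisely the reasoning already invoked in the proof of Lemma~\ref{greed_prog_submodular_dyn}.

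Having pinned down $v^*_X$, I would then apply the standard submodularity-ratio argument to $OPT^*$. Since $|OPT^*|\leq d^*$ and $v^*_X$ maximizes the marginal $f(X\cup\{v\})-f(X)$ over all $v\in V\setminus X$, in particular over $v\in OPT^*\setminus X$, averaging gives
\[
f(X\cup\{v^*_X\})-f(X)\geq\frac{1}{d^*}\sum_{v\in OPT^*\setminus X}\bigl[f(X\cup\{v\})-f(X)\bigr].
\]
The definition of $\gamma_{j+1,d^*}$, applied with the disjoint sets $X$ and $L=OPT^*\setminus X$ (noting $|X|=j<j+1$ and $|L|\leq d^*$), lower-bounds the right-hand side by $\tfrac{\gamma_{j+1,d^*}}{d^*}[f(X\cup OPT^*)-f(X)]$, and monotonicity of $f$ then replaces $f(X\cup OPT^*)$ with $f(OPT^*)$, yielding the claimed inequality.

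The main obstacle I expect is a notational one rather than a conceptual one: carefully stating why the argmax definition of $v^*_X$ is unaffected by the switch from $D$ to $D^*$, since $v^*_X$ is defined in terms of $f_1$ which bakes feasibility in. Once that is made explicit, both inequalities reduce to the already-established static-problem bounds, and no new drift or combinatorial argument is needed.
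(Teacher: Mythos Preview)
Your proposal is correct and follows essentially the same approach as the paper's proof: cite Lemma~2 in \cite{Do2020} for the first inequality, and for the second observe that $|X|<\bar{d}\leq\bar{d}^*$ makes every singleton extension feasible under both $D$ and $D^*$, so $v^*_X$ is unchanged and the same static bound applies to the instance $(f,B,D^*)$. The only difference is that you spell out the averaging/submodularity-ratio/monotonicity chain explicitly, whereas the paper leaves this implicit in the citation.
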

\begin{proof}
The first part is from Lemma 2 in \cite{Do2020}, while the second follows since if $|X|<\bar{d}\leq\bar{d}^*$, then the element contributing the greedy marginal gain to $f_1(X)$ is unchanged.
\end{proof}

This leads to the following result which show POMC's capability in adapting to changes where $\bar{d}^*<\bar{d}$.

\begin{theorem}\label{gsemo_monotone_dyn}
For the problem of maximizing a monotone function under partition matroid constraints, assuming $\bar{d}^*\geq\bar{d}$, POMC generates a population $P$ in expected run time $\mathcal{O}(d^2n/k)$ such that
\begin{align}\label{eq:gsemo_monotone_dyn}
\begin{split}
&\forall m\in[0,\bar{d}],\exists X\in P,|X|\leq m\\&\wedge f(X)\geq\left[1-\left(1-\frac{\gamma_{m,d}}{d}\right)^{m}\right]f(OPT)\\&\wedge f(X)\geq\left[1-\left(1-\frac{\gamma_{m,d^*}}{d^*}\right)^{m}\right]f(OPT^*).
\end{split}
\end{align}
\end{theorem}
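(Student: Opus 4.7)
The plan is to mirror the proof of Theorem~\ref{gsemo_submodular_dyn} almost verbatim, swapping in the monotone-case greedy progress inequalities from Lemma~\ref{greed_prog_monotone_dyn}. Concretely, I would define the predicate
\[
S(X,j):\;|X|\leq j\,\wedge\, f(X)\geq\!\left[1-\!\left(1-\frac{\gamma_{j,d}}{d}\right)^{\!j}\right]\!f(OPT)\,\wedge\, f(X)\geq\!\left[1-\!\left(1-\frac{\gamma_{j,d^*}}{d^*}\right)^{\!j}\right]\!f(OPT^*),
\]
and the auxiliary predicate $Q(i):\,\forall j\in[0,i],\,\exists X\in P_t,\,S(X,j)$. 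The base case $Q(0)$ holds because $0^n\in P_0$, $f(\emptyset)=0$ by normalization, and both right-hand sides vanish at $j=0$.

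For the inductive step I would fix $h\in[0,\bar d-1]$, assume $Q(h)$ holds with witness $X\in P_t$, and consider the mutation that flips exactly the bit corresponding to the greedy addition $v^*_X$ (and no others), producing $X^*=X\cup\{v^*_X\}$. Since $|P_t|\leq d+1$, parent selection followed by this specific mutation occurs with probability at least $1/(en(d+1))$, exactly as in the submodular case. It remains to verify $S(X^*,h+1)$ holds, whence $Q(h+1)$ holds and thereafter persists by elitism. For the $f(OPT)$ inequality, Lemma~\ref{greed_prog_monotone_dyn} gives
\[
f(X^*)\geq\left(1-\frac{\gamma_{h+1,d}}{d}\right)f(X)+\frac{\gamma_{h+1,d}}{d}f(OPT),
\]
and since $\gamma_{h,d}\geq\gamma_{h+1,d}$ (monotonicity of the submodularity ratio in the first index), the bound $S(X,h)$ implies the weaker bound with $\gamma_{h,d}$ replaced by $\gamma_{h+1,d}$; substituting and simplifying gives the required $\bigl[1-(1-\gamma_{h+1,d}/d)^{h+1}\bigr]f(OPT)$. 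The $f(OPT^*)$ inequality is handled identically using the second inequality of Lemma~\ref{greed_prog_monotone_dyn} and $\gamma_{h,d^*}\geq\gamma_{h+1,d^*}$; note this is where the assumption $\bar d^*\geq\bar d$ (needed for Lemma~\ref{greed_prog_monotone_dyn} to apply to the $OPT^*$ side) is used.

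Finally, the one-step improvement probability and the fact that $Q(h)$ is absorbing imply, via the additive drift theorem, that $Q(\bar d)$ is reached in expected at most $e\bar d n(d+1)$ iterations, which is $\mathcal{O}(d^2n/k)$ since $\bar d\leq d/k$. This yields the population property~\eqref{eq:gsemo_monotone_dyn}.

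The only non-routine step I anticipate is the algebraic manipulation that converts the inductive hypothesis (stated with $\gamma_{h,\cdot}$) into the form with $\gamma_{h+1,\cdot}$ before combining it with the greedy progress bound; this hinges on the monotonicity of $\gamma_{i,j}$ in $i$, which is stated right after Definition~\ref{submodratio_def}. Everything else—the probability of generating $X^*$, the persistence of $Q(h)$ under elitist dominance filtering, and the additive drift bookkeeping—transfers directly from the proof of Theorem~\ref{gsemo_submodular_dyn} without modification.
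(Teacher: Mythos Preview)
Your proposal is correct and follows essentially the same approach as the paper's own proof: define the predicates $S(X,j)$ and $Q(i)$, verify $Q(0)$, use Lemma~\ref{greed_prog_monotone_dyn} together with the monotonicity $\gamma_{h,\cdot}\geq\gamma_{h+1,\cdot}$ to establish the one-step progress $Q(h)\Rightarrow Q(h+1)$ with probability at least $1/(en(d+1))$, and conclude via additive drift that $Q(\bar d)$ is reached in $e\bar d n(d+1)=\mathcal{O}(d^2n/k)$ expected iterations. The paper's published proof merely states that the argument mirrors that of Theorem~\ref{gsemo_submodular_dyn} with Lemma~\ref{greed_prog_monotone_dyn} substituted, and your write-up is exactly the expansion of that sketch.
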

\begin{proof}
Let $S(X,j)$ be the expression
\begin{align*}
|X|\leq j&\wedge f(X)\geq\left[1-\left(1-\frac{\gamma_{j,d}}{d}\right)^{j}\right]f(OPT)\\&\wedge f(X)\geq\left[1-\left(1-\frac{\gamma_{j,d^*}}{d^*}\right)^{j}\right]f(OPT^*),
\end{align*}
and $Q(i)$ be the expression $\forall j\in[0,i],\exists X\in P_t,S(X,j)$. We have that $Q(0)$ holds. Similar to the proof for Theorem \ref{gsemo_submodular_dyn}, we get, using Lemma \ref{greed_prog_monotone_dyn}, that the expected number of iterations of POMC until $Q(\bar{d})$ holds is at most $\mathcal{O}(d^2n/k)$.
\ignore{
For each $h\in[0,\bar{d}-1]$, assume $Q(h)$ holds and $Q(h+1)$ does not hold at some iteration $t$, let $X\in P_t$ be the solution such that $S(X,h)$ holds, $S(Y,h)$ and $Q(h)$ holds at iteration $t+1$ for any solution $Y\in P_{t+1}$ such that $Y\succeq X$. This means once $Q(h)$ holds, it must hold in all subsequent iterations of POMC. Let $X^*=X\cup\{v^*_X\}$, Lemma \ref{greed_prog_monotone_dyn} implies
\begin{align*}
f(X^*)&\geq\left[1-\left(1-\frac{\gamma_{h,d}}{d}\right)^{h+1}\right]f(OPT),\\
\text{and}&\\
f(X^*)&\geq\frac{\gamma_{h,d^*}}{d^*}f(OPT^*)\\&+\left(1-\frac{\gamma_{h+1,d^*}}{d^*}\right)\left[1-\left(1-\frac{\gamma_{h,d^*}}{d^*}\right)^h\right]f(OPT^*)\\&
\geq\left[1-\left(1-\frac{\gamma_{h+1,d^*}}{d^*}\right)^{h+1}\right]f(OPT^*).
\end{align*}
The second inequality uses $\gamma_{h,d^*}\geq\gamma_{h+1,d^*}$. This means $S(X^*,h+1)$ holds. Therefore, if $X^*$ is generated, then $Q(h+1)$ holds, regardless of whether $X^*$ is dominated afterwards. According to the bit flip procedure, $X^*$ is generated with probability at least $1/en(d+1)$, so
\[\Pr[Q(h+1)|Q(h)]\geq\frac{1}{en(d+1)},\quad\text{and}\quad\Pr[\neg Q(h)|Q(h)]=0.\]
The Additive Drift Theorem implies that assuming $Q(0)$, the expected number of iterations before $Q(\bar{d})$ holds is at most $e\bar{d}n(d+1)=\mathcal{O}(d^2n/k)$. This proves the theorem.}
\end{proof}

Similar to Expression~\eqref{eq:gsemo_submodular_dyn}, Expression~\eqref{eq:gsemo_monotone_dyn} gives a more elegant form.
\begin{align*}
&\forall m\in[0,\bar{d}],\exists X\in P,|X|\leq m\\&\wedge f(X)\geq\left(1-e^{-\gamma_{m,d}m/d}\right)f(OPT)\\&\wedge f(X)\geq\left(1-e^{-\gamma_{m,d^*}m/d^*}\right)f(OPT^*).
\end{align*}

Just like Theorem \ref{gsemo_submodular_dyn}, this result holds for $\bar{d}^*\leq\bar{d}$ when the quantifier is $\forall m\in[0,\bar{d}^*]$. It is implied that if such a population is made, the new approximation ratio bound is immediately satisfied when the thresholds decrease. Once again, we can derive the result on POMC's adapting performance to $\bar{d}^*\geq\bar{d}$ by using Theorem \ref{gsemo_monotone_dyn}.

\begin{theorem}\label{gsemo_monotone_dyn2}
Assuming POMC achieves a population satisfying \eqref{eq:gsemo_monotone_dyn}, after the change where $\bar{d}^*>\bar{d}$, POMC generates in expected time $\mathcal{O}((\bar{d}^*-\bar{d})d^*n)$ a solution $X$ such that
\[f(X)\geq\left(1-e^{-\gamma_{\bar{d}^*,d^*}\bar{d}^*/d^*}\right)f(OPT^*).\]
\end{theorem}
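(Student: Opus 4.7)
The plan is to mirror the strategy used in the proof of Theorem \ref{gsemo_submodular_dyn2}, but replace the submodular greedy bound with the monotone-function greedy bound in the second inequality of Lemma \ref{greed_prog_monotone_dyn}. The key observation is that, because the hypothesis \eqref{eq:gsemo_monotone_dyn} already delivers a good solution at every cardinality level up to $\bar{d}$ (measured against $f(OPT^*)$ as well as $f(OPT)$), I only need to march the cardinality frontier up by $\bar{d}^*-\bar{d}$ steps, each step costing one greedy addition whose probability I can lower bound.

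Define the predicate
\[S(X,j):=\bigl(|X|\leq j\bigr)\wedge\left(f(X)\geq\left[1-\left(1-\frac{\gamma_{j,d^*}}{d^*}\right)^{j}\right]f(OPT^*)\right).\]
By hypothesis \eqref{eq:gsemo_monotone_dyn} with $m=\bar{d}$, some $X\in P$ satisfies $S(X,\bar{d})$. Since $\bar{d}\leq\bar{d}^*$, any $X$ with $|X|\leq\bar{d}$ remains feasible with respect to $D^*$, so the change-cleanup step of Algorithm \ref{alg:gsemo} does not remove it. Moreover, $S$ is preserved under dominance-based replacement: if $Y\succeq X$ enters the population, then $|Y|\leq|X|\leq j$ and $f(Y)\geq f(X)$, so $S(Y,j)$ also holds. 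Hence once $S(\cdot,j)$ is witnessed in the population for some $j\in[\bar{d},\bar{d}^*]$, it remains witnessed forever.

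For the inductive step, fix $j\in[\bar{d},\bar{d}^*)$ and suppose $X\in P$ witnesses $S(X,j)$. Let $v^*_X$ be the greedy addition with respect to $(f,B,D^*)$ and set $X':=X\cup\{v^*_X\}$; because $|X|<\bar{d}^*$, $X'$ is feasible. Applying the second inequality of Lemma \ref{greed_prog_monotone_dyn},
\[f(X')-f(X)\geq\frac{\gamma_{j+1,d^*}}{d^*}\bigl[f(OPT^*)-f(X)\bigr].\]
Chaining this with the lower bound on $f(X)$ from $S(X,j)$ and using $\gamma_{j,d^*}\geq\gamma_{j+1,d^*}$ (as in the calculation in the proof of Theorem \ref{gsemo_monotone_dyn}), yields
\[f(X')\geq\left[1-\left(1-\frac{\gamma_{j+1,d^*}}{d^*}\right)^{j+1}\right]f(OPT^*),\]
so $S(X',j+1)$ holds. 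The specific mutation producing $X'$ from $X$ (select $X$, flip only the bit of $v^*_X$) happens with probability at least $\tfrac{1}{|P|}\cdot\tfrac{1}{n}(1-1/n)^{n-1}\geq\tfrac{1}{en(d^*+1)}$, since the feasible-only formulation caps $|P|\leq d^*+1$.

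By the Additive Drift Theorem the expected number of iterations to raise the witnessed cardinality level from $\bar{d}$ to $\bar{d}^*$ is at most $(\bar{d}^*-\bar{d})\cdot en(d^*+1)=\mathcal{O}((\bar{d}^*-\bar{d})d^*n)$. At that point some $X\in P$ satisfies $S(X,\bar{d}^*)$, and the inequality $(1-\gamma_{\bar{d}^*,d^*}/d^*)^{\bar{d}^*}\leq e^{-\gamma_{\bar{d}^*,d^*}\bar{d}^*/d^*}$ converts $S(X,\bar{d}^*)$ into the claimed bound. The only delicate point is the algebra of the inductive step, which mirrors the one the authors abbreviated in the \texttt{ignore} block of Theorem \ref{gsemo_monotone_dyn}; the rest is a direct transcription of the Theorem \ref{gsemo_submodular_dyn2} template with $\epsilon_{d^*+\cdot}$ terms replaced by $\gamma_{\cdot,d^*}$ factors.
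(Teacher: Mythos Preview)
Your proof is correct and follows essentially the same template as the paper's: define a cardinality-indexed predicate witnessed at level $\bar d$ by hypothesis~\eqref{eq:gsemo_monotone_dyn}, push it up one level per greedy addition via Lemma~\ref{greed_prog_monotone_dyn}, bound each step's probability by $1/(en(d^*+1))$, and apply additive drift. The only cosmetic difference is that the paper fixes the ratio in the predicate at $\gamma_{\bar d^*,d^*}$ whereas you track the tighter $\gamma_{j,d^*}$; since $\gamma_{j,d^*}\geq\gamma_{\bar d^*,d^*}$ for $j\leq\bar d^*$, the two invariants are equivalent for the final bound.
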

\begin{proof}
Let $S(X,i)$ be a expression
\[|X|\leq j\wedge f(X)\geq\left[1-\left(1-\frac{\gamma_{\bar{d}^*,d^*}}{d^*}\right)^{j}\right]f(OPT^*).\]
Assuming \eqref{eq:gsemo_monotone_dyn} holds for some iteration $t$, let $\bar{X}\in P_t$ be a solution such that $S(\bar{X},i)$ holds for some $i\in[\bar{d},\bar{d}^*)$, and $v^*_{\bar{X}}=\argmax_{v\in V\setminus\bar{X}}\{f_1(\bar{X}\cup\{v\})-f_1(\bar{X})\}$ w.r.t. $(f,B,D^*)$ for any $\bar{X}\subset V$, and $\bar{X}'=\bar{X}\cup\{v^*_{\bar{X}}\}$, Lemma \ref{greed_prog_monotone_dyn} implies
\begin{align*}
f(\bar{X}')&\geq\left[1-\left(1-\frac{\gamma_{i+1,d^*}}{d^*}\right)^{i+1}\right]f(OPT^*).
\end{align*}
So $S(\bar{X}',i+1)$ holds. Following the same reasoning in the proof for Theorem \ref{gsemo_submodular_dyn2}, we get that POMC, given a solution $X$ in the population satisfying $S(X,\bar{d})$, generates another solution $Y$ satisfying $S(Y,\bar{d}^*)$ in expected run time at most $\mathcal{O}((\bar{d}^*-\bar{d})d^*n)$.
\end{proof}

Theorem \ref{gsemo_submodular_dyn2} and Theorem \ref{gsemo_monotone_dyn2} state that there is at least one solution in the population that satisfies the respective approximation ratio bounds in each case, after POMC is run for at least some expected number of iterations. However, the proofs for Theorem \ref{gsemo_submodular_dyn} and Theorem \ref{gsemo_monotone_dyn} also imply that the same process generates, for each cardinality level from $0$ to $\bar{d}^*$, a solution that also satisfies the respective approximation bound ratio, adjusted to the appropriate cardinality term. These results imply that instead of restarting, maintaining the non-dominated population provides a better starting point to recover relative approximation quality in any case. This suggests that the inherent diversity in Pareto-optimal fronts is suitable as preparation for changes in constraint thresholds. 

\section{Experimental Investigations}

We compare the POMC algorithm against GREEDY \cite{GreedyMaxBoundCurvPartMatroid} with the restart approach, on undirected max cut problems with random graphs. Given a graph $G=(V,E,c)$ where $c$ is a non-negative edge weight function, the goal is to find, while subjected to changing partition matroid constraints,
\[X^*=\argmax_X\left\lbrace\sum_{a\in X,b\in V\setminus X}c(a,b)\right\rbrace.\]

Weighted graphs are generated for the experiments based on two parameters: number of vertices ($n$) and edge density. Here, we consider $n=200$, and 5 density values: 0.01, 0.02, 0.05, 0.1, 0.2. For each $n$-$density$ pair, a different weighted graph is randomly generated with the following procedure:
\begin{enumerate}
    \item Randomly sample $E$ from $V\times V$ without replacement, until $|E|=\lfloor density\times n^2\rfloor$.
    \item Assign to $c(a,b)$ a uniformly random value in $[0,1]$ for each $(a,b)\in E$.
    \item Assign $c(a,b)=0$ for all $(a,b)\notin E$.
\end{enumerate}

To limit the variable dimensions so that the results are easier to interpret, we only use one randomly chosen graph for each $n$-density pair. We also consider different numbers of partitions $k$: 1, 2, 5, 10. For $k>1$, each element is assigned to a partition randomly. Also, the sizes of the partitions are equal, as with the corresponding constraint thresholds.
\begin{figure}[t!]
\centering
\includegraphics[width=.8\linewidth]{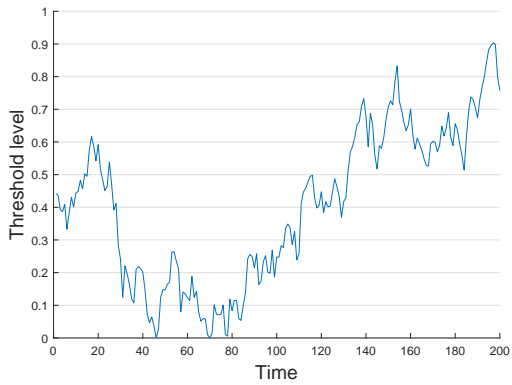}
\caption{Threshold level over time for dynamic problems}
\label{fig:thres_changes}
\end{figure}

The dynamic component is the thresholds $d_i$. We use the approach outlined in \cite{DynKnapsack} to generate threshold changes in the form of a sequence of $m$ values $\{b_i\}_{i=1}^m$. In particular, these values range in $[0,1]$ and are applied to each instance: $d_i=\max\{b_j|B_i|,1\}$ at the $j^{th}$ change, rounded to the nearest integer. The generating formulas are as follow:
\[b_1=\mathcal{U}(0,1),\quad b_{i+1}=\max\{\min\{b_i+\mathcal{N}(0,0.05^2),1\},0\}.\]
The values $b_i$ used in the experiments are displayed in Figure \ref{fig:thres_changes} where the number of changes is $m=200$.

For each change, the output of GREEDY is obtained by running without restriction on evaluations. These values are used as baselines to compare against POMC's outputs . POMC is run with three different settings for the number of evaluations between changes: $5000$, $10000$, $20000$. Smaller numbers imply a higher change frequency leading to a higher difficulty in adapting to changes. Furthermore, POMC is run 30 times for each setting, and means and standard deviations of results are shown in POMC$_{5000}$, POMC$_{10000}$, POMC$_{20000}$, corresponding to different change frequency settings.

\ignore{
\begin{figure*}[ht!]
\centering
\begin{subfigure}{0.246\textwidth}
\centering
\includegraphics[width=\textwidth]{t_100_01_1.eps}
\end{subfigure}
\begin{subfigure}{0.246\textwidth}
\centering
\includegraphics[width=\textwidth]{t_100_01_2.eps}
\end{subfigure}
\begin{subfigure}{0.246\textwidth}
\centering
\includegraphics[width=\textwidth]{t_100_01_5.eps}
\end{subfigure}
\begin{subfigure}{0.246\textwidth}
\centering
\includegraphics[width=\textwidth]{t_100_01_10.eps}
\end{subfigure}
\begin{subfigure}{0.246\textwidth}
\centering
\includegraphics[width=\textwidth]{t_100_05_1.eps}
\end{subfigure}
\begin{subfigure}{0.246\textwidth}
\centering
\includegraphics[width=\textwidth]{t_100_05_2.eps}
\end{subfigure}
\begin{subfigure}{0.246\textwidth}
\centering
\includegraphics[width=\textwidth]{t_100_05_5.eps}
\end{subfigure}
\begin{subfigure}{0.246\textwidth}
\centering
\includegraphics[width=\textwidth]{t_100_05_10.eps}
\end{subfigure}
\begin{subfigure}{0.246\textwidth}
\centering
\includegraphics[width=\textwidth]{t_100_2_1.eps}
\end{subfigure}
\begin{subfigure}{0.246\textwidth}
\centering
\includegraphics[width=\textwidth]{t_100_2_2.eps}
\end{subfigure}
\begin{subfigure}{0.246\textwidth}
\centering
\includegraphics[width=\textwidth]{t_100_2_5.eps}
\end{subfigure}
\begin{subfigure}{0.246\textwidth}
\centering
\includegraphics[width=\textwidth]{t_100_2_10.eps}
\end{subfigure}
\begin{subfigure}{0.246\textwidth}
\centering
\includegraphics[width=\textwidth]{t_200_01_1.eps}
\end{subfigure}
\begin{subfigure}{0.246\textwidth}
\centering
\includegraphics[width=\textwidth]{t_200_01_2.eps}
\end{subfigure}
\begin{subfigure}{0.246\textwidth}
\centering
\includegraphics[width=\textwidth]{t_200_01_5.eps}
\end{subfigure}
\begin{subfigure}{0.246\textwidth}
\centering
\includegraphics[width=\textwidth]{t_200_01_10.eps}
\end{subfigure}
\begin{subfigure}{0.246\textwidth}
\centering
\includegraphics[width=\textwidth]{t_200_05_1.eps}
\end{subfigure}
\begin{subfigure}{0.246\textwidth}
\centering
\includegraphics[width=\textwidth]{t_200_05_2.eps}
\end{subfigure}
\begin{subfigure}{0.246\textwidth}
\centering
\includegraphics[width=\textwidth]{t_200_05_5.eps}
\end{subfigure}
\begin{subfigure}{0.246\textwidth}
\centering
\includegraphics[width=\textwidth]{t_200_05_10.eps}
\end{subfigure}
\begin{subfigure}{0.246\textwidth}
\centering
\includegraphics[width=\textwidth]{t_200_2_1.eps}
\end{subfigure}
\begin{subfigure}{0.246\textwidth}
\centering
\includegraphics[width=\textwidth]{t_200_2_2.eps}
\end{subfigure}
\begin{subfigure}{0.246\textwidth}
\centering
\includegraphics[width=\textwidth]{t_200_2_5.eps}
\end{subfigure}
\begin{subfigure}{0.246\textwidth}
\centering
\includegraphics[width=\textwidth]{t_200_2_10.eps}
\end{subfigure}
\caption{POMC's mean outputs against GREEDY's over time in 6 graph settings. Standard deviations in POMC's are minimal.
\frank{Figure doesn't really show differences. Better to do a table with statistical results?}
}.
\label{fig:timeplot}
\end{figure*}
}

\begin{table*}[t!]
\begin{small}
\renewcommand{\arraystretch}{.76}
\caption{Experimental results for dynamic max cut with $n=200$. Outputs are shown in batches of 50 threshold changes. L--W--T are numbers of losses, wins, ties POMC has over GREEDY, determined by U-tests on data in each change.}
\label{table:experimentsd}
\begin{center}
\begin{tabular}{lll@{\hspace{6pt}}c@{\hspace{6pt}}c@{\hspace{6pt}}c@{\hspace{6pt}}c@{\hspace{6pt}}c@{\hspace{6pt}}c@{\hspace{6pt}}c@{\hspace{6pt}}c@{\hspace{6pt}}c@{\hspace{6pt}}c@{\hspace{6pt}}c}
\toprule
\multirow{2}{*}{$k$}&\multirow{2}{*}{density}&\multirow{2}{*}{Changes}&\multicolumn{2}{c}{GREEDY}&\multicolumn{3}{c}{POMC$_{5000}$}&\multicolumn{3}{c}{POMC$_{10000}$}&\multicolumn{3}{c}{POMC$_{20000}$}\\\cmidrule(l{2pt}r{2pt}){4-5}\cmidrule(l{2pt}r{2pt}){6-8}\cmidrule(l{2pt}r{2pt}){9-11}\cmidrule(l{2pt}r{2pt}){12-14}&&&mean&std&mean&std&L--W--T&mean&std&L--W--T&mean&std&L--W--T\\\cmidrule(l{2pt}r{2pt}){1-14}
\multirow{12}{*}{1}&\multirow{4}{*}{0.01}&1--50&82.93&27.5&81.26&27.01&26--17--7&82.85&27.49&17--25--8&83.68&27.81&7--31--12\\&
&51--100&57.17&24.63&56.11&23.79&39--2--9&56.88&24.26&26--7--17&57.29&24.57&13--16--21\\&
&101--150&100.7&5.001&100.9&5.535&12--32--6&101.7&5.255&5--43--2&102.3&5.11&1--48--1\\&
&151--200&103.1&1.436e-14&105.1&0.2395&0--50--0&105.1&0.2814&0--50--0&105.2&0.2764&0--50--0\\\cmidrule(l{2pt}r{2pt}){2-14}
&\multirow{4}{*}{0.05}&1--50&289.3&106.2&283.7&103.4&30--12--8&288.3&105.3&20--20--10&290.5&106.3&7--29--14\\&
&51--100&186.4&87.89&183.5&85.18&43--0--7&185.3&86.25&33--1--16&186.2&86.92&19--10--21\\&
&101--150&359.4&23.64&358.2&24.7&23--23--4&361.9&25.16&9--39--2&363.5&24.95&2--47--1\\&
&151--200&373.2&2.297e-13&379.5&1.618&0--50--0&380.6&1.817&0--50--0&379.5&1.344&0--50--0\\\cmidrule(l{2pt}r{2pt}){2-14}
&\multirow{4}{*}{0.2}&1--50&890.3&345.2&877.6&336.8&31--13--6&886.8&341&23--21--6&892&343.8&12--28--10\\&
&51--100&551.7&274.3&545.1&266.1&41--2--7&548.7&268.9&34--5--11&550.6&270.5&28--9--13\\&
&101--150&1121&79.61&1116&81.18&33--16--1&1123&81.52&18--25--7&1126&81.89&8--41--1\\&
&151--200&1167&4.594e-13&1179&1.773&0--50--0&1180&1.592&0--50--0&1181&1.734&0--50--0\\\cmidrule(l{2pt}r{2pt}){1-14}
\multirow{12}{*}{2}&\multirow{4}{*}{0.01}&1--50&83&26.99&81.06&26.38&29--14--7&82.77&26.93&20--20--10&83.65&27.27&7--30--13\\&
&51--100&56.89&24.54&55.62&23.61&42--0--8&56.45&24.08&32--3--15&56.91&24.39&17--13--20\\&
&101--150&100.7&4.921&100.5&5.562&18--27--5&101.5&5.351&5--39--6&102.2&5.136&1--48--1\\&
&151--200&103.1&1.436e-14&105.2&0.2744&0--50--0&105.1&0.3222&0--50--0&105.2&0.3047&0--50--0\\\cmidrule(l{2pt}r{2pt}){2-14}
&\multirow{4}{*}{0.05}&1--50&289.4&104.8&283.2&101.9&37--9--4&288&103.9&26--18--6&290.7&105.1&13--30--7\\&
&51--100&185.5&87.7&181.9&84.58&44--0--6&184&85.82&36--1--13&185.1&86.55&25--9--16\\&
&101--150&359.6&23.24&357.8&24.65&27--21--2&361.4&24.79&12--35--3&363.3&24.71&2--44--4\\&
&151--200&373.2&2.297e-13&379.5&1.547&0--50--0&380.2&1.603&0--50--0&379.6&1.72&0--50--0\\\cmidrule(l{2pt}r{2pt}){2-14}
&\multirow{4}{*}{0.2}&1--50&890.5&341.2&876.4&332.2&33--8--9&886.5&337&24--19--7&891.8&339.6&15--26--9\\&
&51--100&548.6&274&540.8&265&42--1--7&544.9&268.2&40--5--5&547.2&270.1&28--7--15\\&
&101--150&1121&78.79&1115&80.14&32--15--3&1122&80.65&19--25--6&1126&80.84&6--41--3\\&
&151--200&1167&4.594e-13&1178&2.186&0--50--0&1180&1.777&0--50--0&1181&1.365&0--50--0\\\cmidrule(l{2pt}r{2pt}){1-14}
\multirow{12}{*}{5}&\multirow{4}{*}{0.01}&1--50&82.89&26.89&80.18&26.47&35--10--5&82.03&26.82&28--17--5&83.19&27.09&21--22--7\\&
&51--100&57.45&23.4&55.25&22.07&46--0--4&56.51&22.63&44--0--6&57.13&22.94&35--2--13\\&
&101--150&100.3&5.205&98.99&6.439&31--15--4&100.4&6.181&17--26--7&101.3&5.851&7--38--5\\&
&151--200&103.1&1.436e-14&105.1&0.299&0--50--0&105.2&0.3053&0--50--0&105.3&0.2958&0--50--0\\\cmidrule(l{2pt}r{2pt}){2-14}
&\multirow{4}{*}{0.05}&1--50&288.1&104.2&276&99.2&45--2--3&282&101&40--7--3&286.2&102.6&35--10--5\\&
&51--100&185.9&83.57&179.9&79.17&44--3--3&183&80.87&41--5--4&184.9&81.91&31--13--6\\&
&101--150&357.1&25.23&347.7&27.29&41--2--7&353.3&26.81&35--13--2&357.4&26.77&23--20--7\\&
&151--200&373.1&0.1956&376&4.098&4--38--8&377.8&3.594&2--44--4&379.5&2.772&0--48--2\\\cmidrule(l{2pt}r{2pt}){2-14}
&\multirow{4}{*}{0.2}&1--50&891.6&337.8&868.9&324.9&42--2--6&881.1&329.8&40--7--3&888.7&333.4&27--10--13\\&
&51--100&556.6&262.2&546.6&251.8&45--0--5&552.3&256&38--1--11&555.2&258.4&26--5--19\\&
&101--150&1117&82.61&1101&84.57&37--10--3&1110&83.79&33--14--3&1117&83.62&25--20--5\\&
&151--200&1167&4.594e-13&1175&4.725&0--48--2&1178&4.139&0--50--0&1180&2.777&0--50--0\\\cmidrule(l{2pt}r{2pt}){1-14}
\multirow{12}{*}{10}&\multirow{4}{*}{0.01}&1--50&80.91&27.29&77.03&26.71&41--3--6&79.3&27.08&32--10--8&80.69&27.32&20--17--13\\&
&51--100&54.9&21.54&51.59&19.49&50--0--0&53.23&20.3&47--0--3&54.17&20.78&38--1--11\\&
&101--150&99.72&5.231&97.23&7.459&33--11--6&99.15&6.985&20--24--6&100.3&6.261&12--34--4\\&
&151--200&103.1&0.09412&105&0.3504&0--50--0&105.1&0.3296&0--50--0&105.2&0.2794&0--50--0\\\cmidrule(l{2pt}r{2pt}){2-14}
&\multirow{4}{*}{0.05}&1--50&282.7&105.5&268.5&99.61&50--0--0&275.1&101.5&43--2--5&279.9&103.2&32--9--9\\&
&51--100&179.9&77.93&172.3&72.08&49--0--1&176.4&74.58&43--2--5&178.9&76.24&27--10--13\\&
&101--150&356.2&23.7&344.8&28.83&44--1--5&351.2&27.92&37--11--2&355.8&26.49&21--21--8\\&
&151--200&372.6&0.7456&374.1&4.464&7--36--7&376.3&3.334&2--45--3&377.5&2.62&1--49--0\\\cmidrule(l{2pt}r{2pt}){2-14}
&\multirow{4}{*}{0.2}&1--50&877.9&340.8&850.1&324.3&45--4--1&865.3&330.6&36--8--6&873.5&334.4&24--17--9\\&
&51--100&544.6&247&532&233.3&41--4--5&539.2&238.8&30--12--8&543.5&242.3&17--21--12\\&
&101--150&1116&77.15&1093&82.69&45--2--3&1106&81.16&36--10--4&1115&79.89&22--21--7\\&
&151--200&1166&1.281&1167&9.098&8--32--10&1173&6.941&2--44--4&1176&5.929&1--48--1\\\cmidrule(l{2pt}r{2pt}){1-14}

\end{tabular}
\end{center}
\end{small}
\end{table*}

We show the aggregated results in Table \ref{table:experimentsd} where a U-test \cite{Corder09} with 95\% confidence interval is used to determine statistical significance in each change. 
%
The results show that outputs from both algorithms are very closely matched most of the time, with the greatest differences observed in low graph densities. Furthermore, we expect that GREEDY fairs better against POMC when the search space is small (low threshold levels). While this is observable, the opposite phenomenon when the threshold level is high can be seen more easily.

We see that during consecutive periods of high constraint thresholds, POMC's outputs initially fall behind GREEDY's, only to overtake them at later changes. This suggests that POMC rarely compromises its best solutions during those periods as the consequence of the symmetric submodular objective function. It also implies that restarting POMC from scratch upon a change would have resulted in significantly poorer results. On the other hand, POMC's best solutions follow GREEDY's closely during low constraint thresholds periods. This indicates that by maintaining feasible solutions upon changes, POMC keeps up with GREEDY in best objectives well within quadratic run time.

Comparing outputs from POMC with different interval settings, we see that those from runs with higher number of evaluations between changes are always better. However, the differences are minimal during the low constraint thresholds periods. This aligns with our theoretical results in the sense that the expected number of evaluations needed to guarantee good approximations depends on the constraint thresholds. As such, additional evaluations won't yield significant improvements within such small feasible spaces.

Comparing between different $k$ values, POMC seems to be at a disadvantage against GREEDY's best at increased $k$. This is expected since more partitions leads to more restrictive feasible search spaces, given everything else is unchanged, and small feasible spaces amplify the benefit of each greedy step. Nevertheless, POMC does not seem to fall behind GREEDY significantly for any long period, even when given few resources.

\section{Conclusions}

In this study, we have considered combinatorial problems with dynamic constraint thresholds, particularly the important classes of problems where the objective functions are submodular or monotone. We have contributed to the theoretical run time analysis of a Pareto optimization approach on such problems. Our results indicate POMC's capability of maintaining populations to efficiently adapt to changes and preserve good approximations. In our experiments, we have shown that POMC is able to maintain at least the greedy level of quality and often even obtains better solutions.

\section{Acknowledgements}
This work has been supported by the Australian Research Council through grants DP160102401 and DP190103894.

\bibliography{ref}
\end{document}